
\documentclass[letterpaper, 10 pt, conference]{ieeeconf}  
\usepackage{amsmath}
\usepackage{amssymb} 
\usepackage{algorithm} 
\usepackage{algpseudocode}
\usepackage{multirow}
\usepackage{booktabs,subcaption}
\usepackage{graphicx}
\usepackage{pgffor}
\usepackage{placeins}
\usepackage{makecell}
\usepackage{adjustbox}

\newtheorem{theorem}{Theorem}[section]

\usepackage[T1]{fontenc}
\usepackage{cite}
\usepackage{hyperref}
\newcommand{\argmax}{\mathop{\mathrm{arg\,max}}}

\IEEEoverridecommandlockouts                              

\overrideIEEEmargins                                      





\title{\LARGE \bf
Viewpoint-Agnostic Manipulation Policies \\with Strategic \emph{Vantage} Selection 
}

\author{Sreevishakh Vasudevan,$^1$ Som Sagar,$^1$ and Ransalu Senanayake$^1$ 
\thanks{$^1$Laboratory for Learning Evaluation and Naturalization of Systems (LENS Lab), Arizona State University (ASU), USA. Emails: {\tt\small <svasud23,ssagar6,ransalu>@asu.edu}}}

\begin{document}

\maketitle
\thispagestyle{empty}
\pagestyle{empty}

\begin{abstract}

Since vision-based manipulation policies are typically trained from data gathered from a single viewpoint, their performance drops when the view changes during deployment. Naively aggregating demonstrations from numerous random views is not only costly but also known to destabilize learning, as excessive visual diversity acts as noise. We present \emph{Vantage}, a viewpoint selection framework to fine-tune any pre-trained policy on a small, strategically chosen set of camera poses to induce viewpoint-agnostic behavior. Instead of relying on costly brute-force search over viewpoints, \emph{Vantage} formulates camera placement as an information gain optimization problem in a continuous space. This approach balances exploration of novel poses with exploitation of promising ones, while also providing theoretical guarantees about convergence and robustness. Across manipulation tasks and policy families, \emph{Vantage} consistently improves success under viewpoint shifts compared to fixed, grid, or random data selection strategies with only a handful of fine-tuning steps. Experiments conducted on simulated and real-world setups show that \emph{Vantage} increases the task success rate by $\approx$25\% for diffusion policies, and yields robust gains in dynamic-camera settings. GitHub:  \url{https://github.com/sreevishakhv/Vantage_Public}

\end{abstract}

\section{INTRODUCTION}
\label{sec:intro}

\begin{figure}[!t]
    \centering
    \begin{subfigure}[t]{\linewidth}
        \centering
        \includegraphics[width=\linewidth]{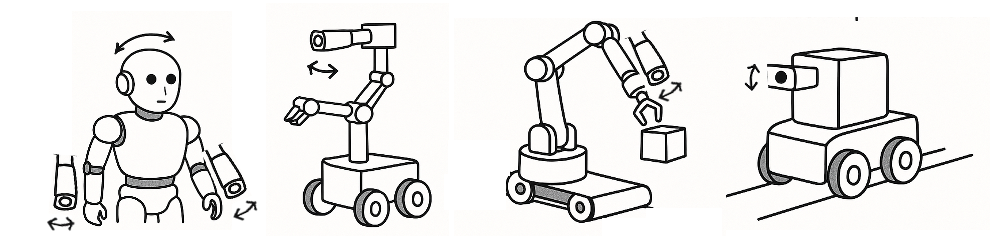}
        \caption{}
        \label{fig:viewpoint_a}
    \end{subfigure}
    \vspace{1em}
    \begin{subfigure}[t]{\linewidth}
        \centering
        \includegraphics[width=\linewidth]{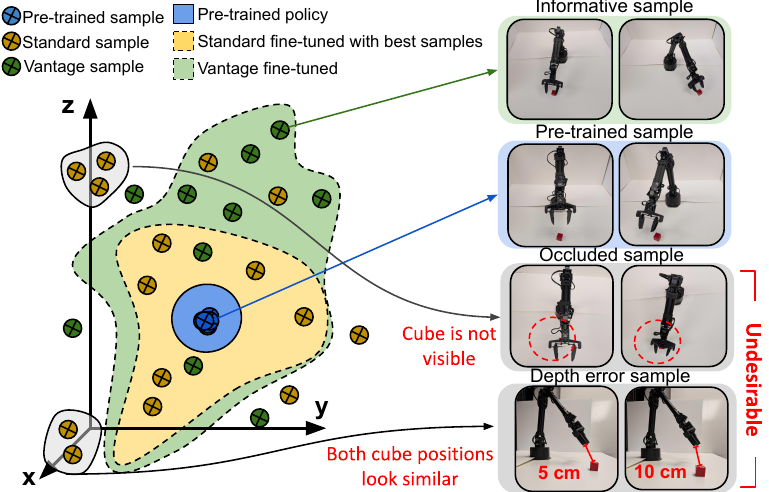}
        \caption{}
        \label{fig:viewpoint_b}
    \end{subfigure}
    \caption{
    (a) In contemporary and future robotic platforms, cameras are frequently mounted on moving bodies or joints, causing dynamic changes in viewpoints. 
     (b) Illustration of selecting camera viewpoints when training manipulation policies so that they become agnostic to camera location at test time. Randomly selecting viewpoints deteriorates performance. Consider a setup where a camera can be placed anywhere in the $x\!-\!y\!-\!z$ space to observe the manipulator and its surrounding. Colored regions indicate where the camera can be placed for each policy for good performance. Pre-trained policies work only when a camera is placed in the narrow region (blue) of high accuracy, where it was originally trained. Standard fine-tuning (yellow), which relies on samples (i.e., collecting demonstrations) collected from randomly or uniformly placed camera viewpoints, spreads demonstration collection and fine-tuning budget across many \emph{uninformative} regions. Such samples with occlusion or errors in depth perception (gray) can even hinder learning performance. \emph{Vantage}, in contrast, strategically selects a small number of informative viewpoints (green), targeting areas that maximize downstream task performance. This allows vantage-fine-tuned policies to perform well even in dynamic camera settings, described in (a) and Section~\ref{sec:intro}.}
    \label{fig:viewpoint}
\end{figure}

Modern robot manipulation policies trained with visual inputs have achieved levels of precision and adaptability that were once considered far-fetched. Yet, behind this success lies a subtle but critical limitation: most policies learn to act correctly only from the camera viewpoint they were trained on. Even slight changes in viewpoints can reduce a confident policy to a failure. This brittleness arises because camera viewpoint has a decisive influence on what a robot sees. For instance, an overhead view may offer an unobstructed global view of a workspace but risks being occluded as the arm moves. Similarly, a side-mounted camera tracks motion well but distorts object geometry, making attributes like orientation of objects harder to infer. More critically, in dynamic camera settings such as in humanoid robots with constantly moving heads, mobile manipulators with the camera mounted on pan-tilt heads or mobile bases, robots on moving assembly lines, viewpoint-specialized policies quickly lose their reliability, restricting robots from performing robustly outside carefully staged environments~\cite{lee2022uncertainty,zhang2023affordance}.

One might expect that training on data from many random viewpoints would solve the problem. In practice, however, excessive variation from uninformative perspectives such as oblique, occluded, and overhead views act as corrupt data that hinders the model from learning stable, viewpoint-invariant representations. This phenomenon parallels the challenges of domain generalization, where exposure to overly diverse training distributions can hinder rather than help performance~\cite{zhang2017understanding,muandet2013domain}. Recent benchmarks also highlight how varying viewpoints affect models' ability to understand properties, affordances, and constraints in the workspace~\cite{atharva25}. These challenges warrant new methods that strategically select informative viewpoints while avoiding those that hinder policy learning. Since the landscape of robot learning is shifting toward fine-tuning generalist policies~\cite{kim2024openvla} for specific applications, it becomes essential to perform this strategic selection during fine-tuning. Further, from a machine learning perspective, fine-tuning on a good enough model is more stable and data-efficient than learning to be invariant from scratch. In this regime, the key question is no longer ``How can we learn from every possible viewpoint?'' but rather ``Which viewpoints are most valuable for fine-tuning?''

\begin{figure*}[t]
  \centering
  \includegraphics[width=\textwidth]{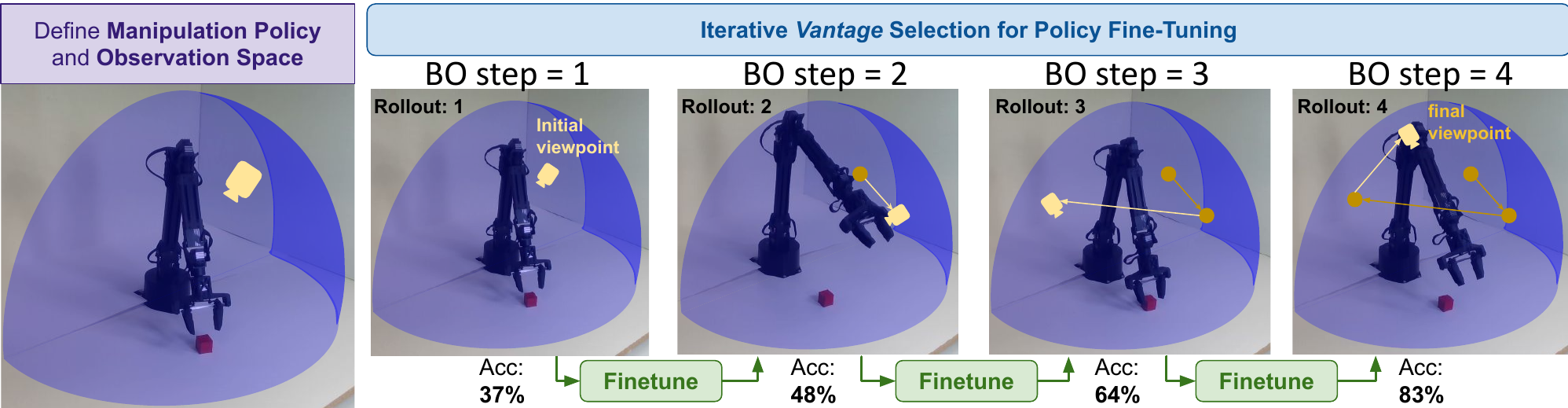}
    \caption{Overview of the \emph{Vantage} framework. Starting from a pre-trained manipulation policy and an initial camera viewpoint, the system iteratively selects additional viewpoints for fine-tuning. After each Bayesian optimization step (BO step), the updated policy is evaluated across the observation space, and the performance signal is used to guide the next selection. By progressively incorporating strategically chosen views, the policy becomes increasingly robust to viewpoint shifts, converging to a final model that generalizes across diverse views.}
  \label{fig:intro}
\end{figure*}

We introduce Vantage, a framework that iteratively suggests camera poses for targeted data collection, enabling strategic fine-tuning that improves policy success rate across diverse camera poses. At its core, Vantage formulates viewpoint identification as a Bayesian optimization (BO) problem with a Gaussian process surrogate, optimizing a batched Upper Confidence Bound (q-UCB) acquisition strategy that suggests multiple informative  viewpoints to collect data. Since collecting demonstrations from different views is time consuming, this approach provides a sample-efficient solution. By iteratively fine-tuning on these ``vantage points,'' the policy learns to generalize across unseen perspectives, balancing view diversity with learning stability. 

We evaluate Vantage across a real-world Unitree D1 arm and a simulated Panda arm on RoboSuite manipulation tasks using multiple vision-guided policy architectures including Behavior Cloning (BC), Batch-Constrained Deep Q-learning (BCQ), BC Transformers (BCT), Action Chunking Transformers (ACT), and Diffusion Policies. Our experiments reveal substantial and consistent gains: for instance, a Diffusion policy on Pick\&Place improves from 37.01\% to 83.20\% success after fine-tuning with Vantage (+46.19\%). These gains emerge with only a handful of fine-tuning steps, making the approach not merely effective but also sample-efficient. Our key contributions can be summarized as:
\begin{enumerate}
    \item Proposing a framework for learning viewpoint-agnostic manipulation policies.
    \item Providing practically useful theoretical guarantees on the number of trials needed, stopping criteria for fine-tuning, and robustness under camera placement errors.
    \item Extensive empirical validation to demonstrate that Vantage consistently outperforms baselines.
\end{enumerate}


\section{RELATED WORK}

Early work sought robustness by training the policy with diverse variations in data such as changes in lighting, texture, background, and camera pose, so that the variations during training look similar to that of deployment (e.g., a shifted viewpoint)~\cite{tobin2017domain, sadeghi2016cad2rl}. Rather than physically capturing every camera pose, which is expensive, recent work attempts to synthesize alternative views through generative models. For instance, single-image novel-view diffusion models like Zero-1-to-3 \cite{liu2023zero1to3zeroshotimage3d}, ZeroNVS \cite{sargent2024zeronvszeroshot360degreeview} can render consistent target viewpoints. Robotics-specific pipelines such as VISTA \cite{tian2025viewinvariantpolicylearningzeroshot} then use these renders to train policies that are more stable under camera shifts, or to inject corrective visual perturbations via scene reconstructions. Cross-embodiment viewpoint augmentation \cite{chen2024roviaugrobotviewpointaugmentation} has likewise been explored to transfer skills with fewer new samples. These methods have focused on extending the viewpoint coverage only by a small angle. Scaling these to different or larger observation spaces requires training a separate view-generating diffusion model for each small region, which limits their practicality due to extremely high computational cost. Such limitations warrant selecting only important viewpoints.

Another reason to be deliberate about viewpoint selection comes from evidence in large in-the-wild datasets such as DROID \cite{khazatsky2024droid}, which was originally designed to demonstrate the value of broad scene diversity, including varied setups that implicitly alter viewpoints. Nevertheless, recent analyses caution that overly aggressive augmentation or randomization can degrade accuracy and do not consistently improve generalization \cite{zhang2017understanding, muandet2013domain, Gulrajani2021Search}, motivating more targeted data sampling strategies. Parallel work in domain generalization further highlights that excessive diversity between training and test distributions can harm performance~\cite{muandet2013domain, Gulrajani2021Search}. Together, these findings underscore the importance of actively selecting informative viewpoints.

In 3D reconstruction, there are attempts to actively gather viewpoints during training~\cite{Jayaraman2018LearningViewpointInvariance,Wu2023NeuralNBV,Lin2023NeuralImplicitActiveVision,Dhami2023MAPNBV,Dhami2023PredNBV,hou2024learning,liu2024splatraj,Chen2024GenNBV}. Modern solutions have utilized retrieval augmented priors~\cite{wright2024robust} to speed up the process. These 3D reconstruction methods focus on computing the similarity between two images as a proxy for information gain. However, this strategy does not transfer to manipulation settings, where distinct viewpoints do not necessarily translate into improved task performance. Rather than optimizing a single next view, our method based on BO emphasizes maximizing the downstream task performance of learned policies across unseen and dynamic camera settings. While BO has previously been used in robotics for informative path planning in mobile robots and controls~\cite{marchant2014bayesian,calandra2017bayesian} using Gaussian processes~\cite{williams2006gaussian} and Bayesian Hilbert maps~\cite{senanayake2017bayesian}, to the best of our knowledge, it has not been used for viewpoint selection in training robot manipulation policies.

\begin{figure*}[t]
  \centering
  \includegraphics[width=1\textwidth]{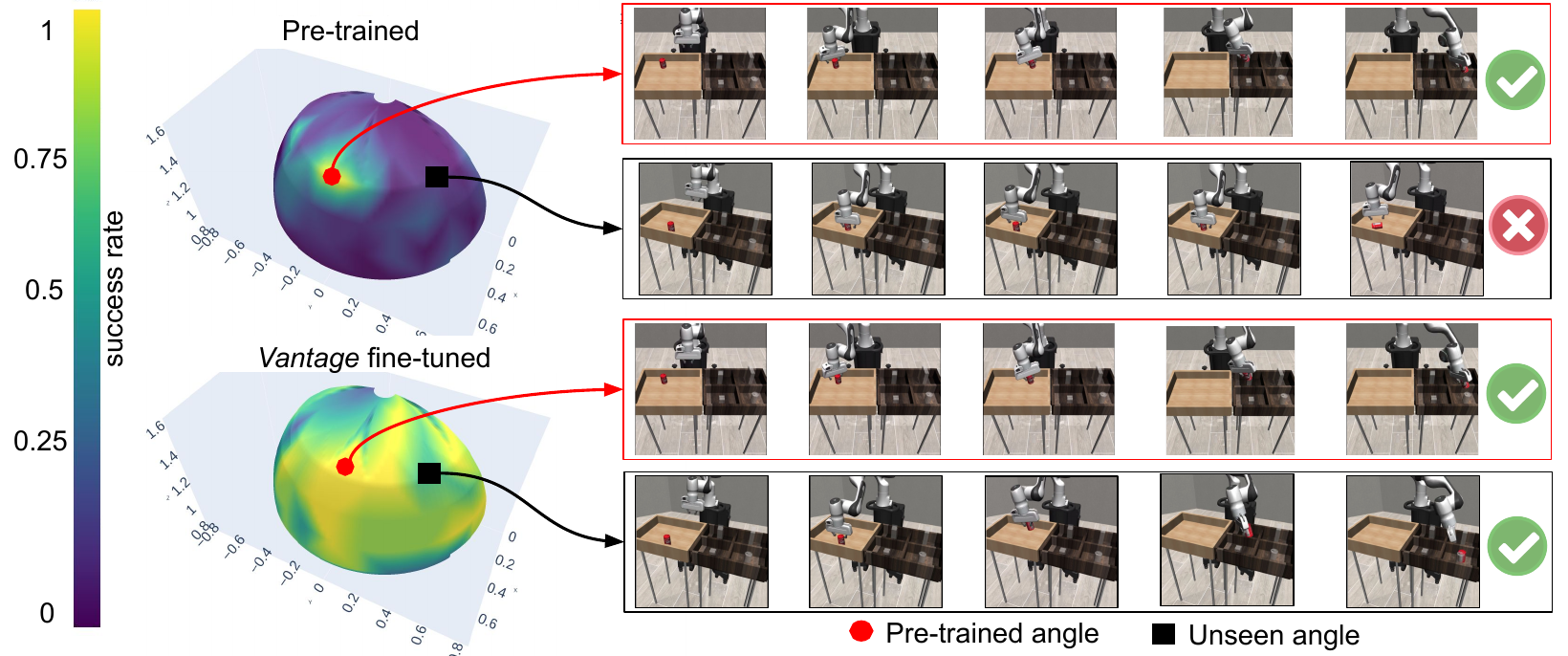}
    \caption{Success rate across observation space (i.e., the cradle of a hemisphere where the camera can be moved at deployment). The top row shows performance under the default pretraining viewpoint versus a novel test viewpoint. The bottom row compares the same hemisphere after \emph{Vantage} fine-tuning. While the pre-trained policy works around only where it was trained, \emph{Vantage}-fine-tuned model works almost everywhere.}
  \label{fig:compa}
\end{figure*}

\section{METHODOLOGY}
\label{Methodolgy}

\subsection{Viewpoint Selection as Optimization}

At the heart of our framework lies a simple question: \emph{from which camera angles should the robot learn in order to be robust across all others?} We define a viewpoint, $\theta$, in the 3D space as the camera placement around the robot (see Fig.~\ref{fig:intro}). Given a pre-trained policy $\pi$, our goal is to identify a \textbf{vantage point}, $\theta^{\text{vantage}}$, from the set of all training points $\theta_{\text{train}}$, such that fine-tuning $\pi$ with data from this view maximizes the average task success rate
$J\bigl(\pi_{\theta_{\text{train}}}; \Theta_{\text{test}}\bigr),$ across rollouts from a set of test viewpoints, $\Theta_{\text{test}}$. Thus, we want to learn,
\begin{equation}
  \theta^\text{vantage}
  = \argmax_{\theta_\text{train} \in \Theta_\text{train}}
      J\!\left(\pi_{\theta_\text{train}}(\Theta_\text{test})\right),
\end{equation}
where, $\pi_{\theta_\text{train}}$ denotes the policy obtained by fine-tuning the pre-trained model on the original dataset together with demonstrations collected from viewpoint $\theta_\text{train}$.

Since the mapping from viewpoint to downstream task success has no analytic gradients or closed-form structure, gradient-based optimization and exhaustive enumeration are impractical. 
This setting aligns precisely with the regime addressed by black-box optimization methods, motivating our choice of Bayesian optimization for efficient viewpoint selection in the continuous observation space of $\Theta_\text{test}$. We will later establish in Theorem~\ref{thm:1} that this formulation ensures the cumulative regret of our search remains sublinear, and hence, the number of poor viewpoint choices grows much slower than the number of fine-tuning attempts.


\subsection{Bayesian Optimization for Viewpoint Search}

We model the mapping from training viewpoint $\theta_\text{train}$ to performance $J$ using a Gaussian Process (GP):
\begin{equation}
  J(\pi_{\theta_\text{train}})
  \sim \mathcal{GP}\!\big(\mu(\theta_\text{train}),
  k(\theta_\text{train}, \theta_\text{train}^\prime)\big),
  \label{eq:gp}
\end{equation}
with squared-exponential kernel $k(\cdot,\cdot)$. This surrogate captures our uncertainty: the posterior mean $\mu$ predicts expected manipulation task performance, while the covariance induced by $k$ encodes uncertainty in predicting task performance due to similarity between camera positions. This uncertainty allows us to decide \emph{where to look next} without exhaustively training on every view. See~\cite{senanayake2024role} for a review on uncertainty in robotics.  

Classical BO would select one new viewpoint at a time, but policy fine-tuning can be parallelized, thus we use a batched acquisition strategy, q-Upper Confidence Bound (q-UCB)~\cite{wilson1712reparameterization}. For a batch $\Theta_q=\{\theta_1,\dots,\theta_q\}$, q-UCB scores each candidate set of viewpoints by combining exploitation (high predicted mean) with exploration (high uncertainty):
\begin{equation}
\begin{split}
\Theta_q^\text{next}
&= \argmax_{\Theta_q \subset \Theta_\text{train}} 
    \alpha_{\mathrm{qUCB}}(\Theta_q) \\
&= \argmax_{\Theta_q \subset \Theta_\text{train}} 
    \mathbb{E}_{\tilde \Theta \sim 
      \mathcal{N}\!\bigl(\mu(\Theta_q),\,\tfrac{\beta\pi}{2}\,\Sigma(\Theta_q)\bigr)} \\
&\quad \Bigl[\max_{i=1,\dots,q}\bigl(\mu(\theta_i) 
      + \lvert \tilde \Theta_i - \mu(\theta_i)\rvert\bigr)\Bigr].
\end{split}
\label{eq:qucb}
\end{equation}

Here, $ \mathcal{N}$ is a normal distribution and $\beta>0$ controls exploration vs exploitation: large $\beta$ encourage sampling from uncertain regions, while smaller ones favor viewpoints already predicted to perform well. Please refer Appendix A of~\cite{wilson1712reparameterization} for the detailed derivation of~(\ref{eq:qucb}). As we will show in Theorem~\ref{thm:2}, this acquisition strategy ensures that the average task success of policies fine-tuned with \emph{Vantage} converges toward the optimal viewpoint, with error diminishing at rate $O(T^{-1/2})$ with $T$ steps.

\subsection{Iterative Fine-Tuning Procedure}
The full Vantage procedure unfolds as follows:
\begin{enumerate}
    \item \textbf{Initialization.} Sample $q$ random viewpoints, fine-tune, and record success rates. Fit an initial GP surrogate. 
    \item \textbf{Vantage training loop.} In iteration $T$, update the GP with accumulated data, select a new batch of viewpoints via q-UCB, fine-tune at each, evaluate, and expand the dataset. 
    \item \textbf{Final selection.} After $N$ rounds, choose the fine-tuned policy with the highest observed success rate.
\end{enumerate}

This iterative process is given in detail in Algorithm~\ref{algo:vantage}. Conceptually, each round of fine-tuning both improves the policy and refines our estimate of the underlying function $f(\theta)$, steering the search toward increasingly informative vantage points. Moreover, practical deployment often faces camera placement noise due to calibration errors or mechanical offsets; Theorem~\ref{thm:robustness} will show that the BO trajectory is provably robust to such Gaussian perturbations, ensuring stability of our method in real-world settings.

\subsection{Theoretical Guarantees}
Beyond empirical performance, Vantage provides provable guarantees that offer practical guidance for engineers. These results not only establish formal learning-theoretic bounds, but also translate into actionable principles for system design: how many trials are needed, when to stop fine-tuning, and how precise the hardware setup must be. By framing the guarantees in terms of data efficiency, convergence behavior, and robustness to camera placement errors, Vantage bridges the gap between abstract theory and the concrete engineering decisions required for real-world robot deployment.  

\begin{theorem}[Efficiency of viewpoint selection]\label{thm:1}
Let $f:\Theta \to [0,1]$ denote the mapping from training viewpoints to average success rates, drawn from a GP prior with kernel $k$. Assume observations $y_t = f(\theta_t)+\varepsilon_t$ with $\varepsilon_t$ sub-Gaussian. Running $q$-UCB for $T$ rounds yields cumulative regret
\[
R(T) = \sum_{t=1}^T \sum_{j=1}^q \bigl[f(\theta^*) - f(\theta_{t,j})\bigr]
= O\!\left(\sqrt{qT\,\gamma_{qT}\,\beta_T}\right),
\]
with probability at least $1-\delta$, where $\delta$ is a user chosen failure probability, and $\theta^*=\arg\max_{\theta\in\Theta} f(\theta)$, $\gamma_{qT}$ is the batch information gain, and $\beta_T$ is confidence.
\end{theorem}

\begin{proof}[Proof sketch]
The argument follows Theorem~2 of~\cite{Srinivas_2012}. Each UCB choice ensures that the instantaneous regret is bounded by a multiple of the posterior standard deviation. Summing over $qT$ queries and applying Cauchy–Schwarz with the information gain bound yields the sublinear regret rate. Full proof Appendix~\ref{thm:gpucb}.
\end{proof}

This GP-UCB regret bound assures engineers that wasted trials grow slowly compared to total trials, meaning only a handful of optimized camera placements (e.g., 10–15) are typically sufficient rather than hundreds. This lets practitioners budget resources such as robot hours and data collection costs up front.

\begin{theorem}[Success rate convergence]\label{thm:2}

Under the same assumptions, with probability at least $1-\delta$,
\[
\begin{aligned}
\frac{1}{T}\sum_{t=1}^T J(\pi_{\theta_t})
&\;\ge\;
J(\pi_{\theta^*})
- O\!\Bigl(\sqrt{\tfrac{\gamma_T\,\beta_T}{T}}\Bigr)
\end{aligned}
\]

\end{theorem}

\begin{proof}[Proof sketch]
From Theorem~\ref{thm:1}, cumulative regret is $O(\sqrt{T\,\gamma_T\,\beta_T})$. Dividing by $T$ gives the average success rate bound, which shows that the mean performance of policies fine-tuned with Vantage converges to the optimum. Full proof is provided in the Appendix~\ref{thm:avgconv}. 
\end{proof}

These results guarantee that, as the number of fine-tuning rounds increases, Vantage converges to near-optimal camera viewpoints while requiring only sublinear exploration of the vast space. Engineers can use this to define stopping rules to avoid over-training and save time. Once success rates plateau, further fine-tuning offers diminishing returns.

\begin{table*}[t]
  \centering
  \setlength{\tabcolsep}{4pt}
  \renewcommand{\arraystretch}{0.95}
  \small
    \caption{Success rates (\%) of policies across tasks under three camera settings: 
    \emph{Default} (trained viewpoint), \emph{$\Theta$} (set of all allowed viewpoints: quarter-sphere in front of the robot), and \emph{Dynamic} (continuous motion). 
    Results compare baseline training, grid/random augmentation, and our proposed method \emph{Vantage}.}
  \label{tab:table_1}
  \resizebox{\linewidth}{!}{%
  \begin{tabular}{llcccccccccccc}
    \toprule
      & \multirow{2}{*}{Camera}
      & \multicolumn{4}{c}{BC} & \multicolumn{4}{c}{Diffusion} & \multicolumn{4}{c}{BCT} \\
      \cmidrule(lr){3-6} \cmidrule(lr){7-10} \cmidrule(lr){11-14}
      & Placement & Base & Grid & Standard & Vantage
        & Base & Grid & Standard & Vantage
        & Base & Grid & Standard & Vantage\\
    \midrule
    \multirow{3}{*}{Lift}
      & Default   & 100.0 & 100.0 & 100.0 & 100.0
                   & 100.0 & 100.0 & 100.0 & 100.0 
                   & 100.0 & 100.0 & 100.0 & 100.0 \\
      & $\Theta$  & 6.90  & 9.18  & 23.15 & \textbf{23.50}
                   & 50.40 & 58.57 & 65.82 & \textbf{66.10} 
                   & 6.71 & 10.12 & 10.12 & \textbf{10.44} \\
      & Dynamic   & 91.66 & 93.33 & \textbf{100.0} & \textbf{100.0}
                   & 98.33 & \textbf{100.0} & \textbf{100.0} & \textbf{100.0} 
                   & 91.66 & 97.33 & 97.33 & \textbf{98.12} \\
    \midrule
    \multirow{3}{*}{Pick Place}
      & Default   & 70.0 & 70.0 & 70.0 & 70.0
                   & 90.0 & 90.0 & 90.0 & 90.0 
                   & 90.0 & 90.0 & 90.0 & 90.0 \\
      & $\Theta$  & 0.80 & 1.04 & 2.53 & \textbf{3.90}
                   & 37.01 & 22.09 & 66.11 & \textbf{83.20} 
                   & 1.69 & 1.70 & 1.94 & \textbf{1.94} \\
      & Dynamic   & 3.00 & 4.23 & 7.66 & \textbf{9.88}
                   & 88.33 & 71.54 & 92.33 & \textbf{97.16} 
                   & 13.33 & 16.17 & 18.04 & \textbf{18.04} \\
    \midrule
    \multirow{3}{*}{Square}
      & Default   & 30.0 & 20.0 & \textbf{30.0} & 20.0
                   & 60.0 & 60.0 & 60.0 & \textbf{70.0} 
                   & 50.0 & 50.0 & 60.0 & \textbf{60.0} \\
      & $\Theta$  & 0.24 & 0.61 & 0.74 & \textbf{1.00}
                   & 2.38 & 2.38 & 12.08 & \textbf{14.80} 
                   & 0.74 & 1.49 & 1.49 & \textbf{1.49} \\
      & Dynamic   & 0.00  & 0.00  & 0.00  & 0.00
                   & 8.33 & 8.33 & 40.0 & \textbf{54.66} 
                   & 0.00 & 0.51 & 0.51 & \textbf{0.51} \\
    \bottomrule
  \end{tabular}}
\end{table*}

\begin{algorithm}[t]
\caption{\textit{Vantage}}
\label{algo:vantage}
\begin{algorithmic}

\State \textbf{Step 1: Gather Initial Data}
\State Sample $q$ random viewpoints $\{\theta_\text{train}^{(j)}\}_{j=1}^q$, where each $\theta_\text{train}^{(j)} \in \Theta_\text{train}$
\State Generate manipulation datasets $\{\mathcal{D}^{(j)}\}_{j=1}^q$, from robot trials or simulation at viewpoints $\{\theta_\text{train}^{(j)}\}_{j=1}^q$
\State Fine-tune the original policy on $\mathcal{D}_j$ independently
\State Evaluate the fine-tuned models across $\Theta$ to obtain success rates $\{J_i\}_{i=1}^q$
\State Initialize a historical dataset $\mathcal{D}_{gp} \gets  \{(\theta_j, J_i)\}_{i=1}^q$

\State \textbf{Step 2: Vantage training loop}
\For{$i = 1$ to $N$}
    \State Use $D_{gp}$ and Bayesian optimization to select q new angles $\{\theta_{\text{new},j}\}_{j=1}^q = (\theta_h^{\text{new},j}, \theta_v^{\text{new},j}) \in \Theta$
    \State Generate datasets $\mathcal{D}_{\text{new},j}$ at $\theta_{\text{new},j}$ 
    \State Fine-tune the original model separately on $\mathcal{D}_{\text{new},j}$
    \State Evaluate the model fine-tuned at $\theta_{\text{new}}$ to obtain $J_{\text{new}}$
    \State Update $\mathcal{D}_{gp} \gets \mathcal{D}_{gp} \cup \{(\theta_{\text{new},j}, J_{\text{new},j})\}$
\EndFor

\State \textbf{Step 3: Final Selection}
\State $\theta^{vantage} \gets \displaystyle \arg\max_{\theta \in \Theta}\, J(\pi_\theta)$ 

\end{algorithmic}
\end{algorithm}

\begin{theorem}[Robustness under camera placement error]\label{thm:robustness}
Let $f:\mathbb{R}^d \to \mathbb{R}$ be modeled with a Gaussian process prior using the squared--exponential covariance. 
Suppose Bayesian optimization selects query viewpoints $x_t \in \mathbb{R}^d$, but the executed viewpoints are 
$\tilde{x}_t = x_t + \varepsilon_t$, where $\varepsilon_t \sim \mathcal{N}(0,\sigma_x^2 I_d)$ are i.i.d.\ perturbations due to camera placement error of standard deviation $\sigma$. 
Define the effective objective $g(x) = \mathbb{E}[\,f(x+\varepsilon)\,].$ Then $g$ is again governed by a squared exponential GP with re-parametrized hyperparameters. 
Hence, \emph{Bayesian optimization with squared--exponential priors is robust to such input noise}: 
the optimization trajectory is unchanged relative to the noise-free case.
\end{theorem}

\begin{proof}[Proof sketch]
Since $g(x)=\mathbb{E}_\varepsilon[f(x+\varepsilon)]$ is a linear functional of a GP, $g$ is again a GP.  
Its covariance is
\[
k_g(x,x')=\mathbb{E}_{\varepsilon,\varepsilon'}\big[k(x+\varepsilon,\,x'+\varepsilon')\big],
\]
with $k$ the squared--exponential kernel.  
Because $\varepsilon-\varepsilon'\sim\mathcal{N}(0,2\sigma_x^2I_d)$, this expectation reduces to a Gaussian integral
\[
k_g(x,x')=\sigma_f^2\,\mathbb{E}_\delta\!\Big[\exp\!\big(-\tfrac{\|(x-x')+\delta\|^2}{2\ell^2}\big)\Big],\quad \delta\sim\mathcal{N}(0,2\sigma_x^2I_d).
\]
Applying the standard Gaussian moment identity yields
\[
k_g(x,x')=\sigma_f^2\Big(\tfrac{\ell^2}{\ell^2+2\sigma_x^2}\Big)^{\!d/2}\!
\exp\!\Big(-\tfrac{\|x-x'\|^2}{2(\ell^2+2\sigma_x^2)}\Big).
\]
Thus $g$ has the same squared--exponential form with enlarged lengthscale and reduced marginal variance as stated. Full proof is provided in the Appendix~\ref{thm:robustness_appendix}.  
\end{proof}

This theorem highlights that engineers do not need ultra-precise camera placement; even with modest calibration errors or approximate positioning, performance remains stable, highlighting the method's practical simplicity.

\section{EXPERIMENTAL RESULTS}
\label{sec:exp_results}

We design experiments to rigorously evaluate \emph{Vantage} across both simulation and real-robot settings. Our goal is to assess whether strategic viewpoint selection improves generalization to novel viewpoints and dynamic camera motion. In particular, we seek to answer the following four questions:

\begin{enumerate}
    \item How does a policy fine-tuned with \emph{Vantage} compare to standard and grid  based fine-tuning? (Sec~\ref{exp:1}) 
    \item Does \emph{Vantage} enable more efficient discovery of informative camera viewpoints? (Sec~\ref{exp:2})
    \item How well do \emph{Vantage} trained policies work in dynamic viewpoint changes? (Sec~\ref{exp:3})
    \item Do the improvements achieved with \emph{Vantage} extend to real robot deployments? (Sec~\ref{exp:4})
\end{enumerate}

\begin{figure}[t]
    \centering
    \includegraphics[width=0.24\linewidth]{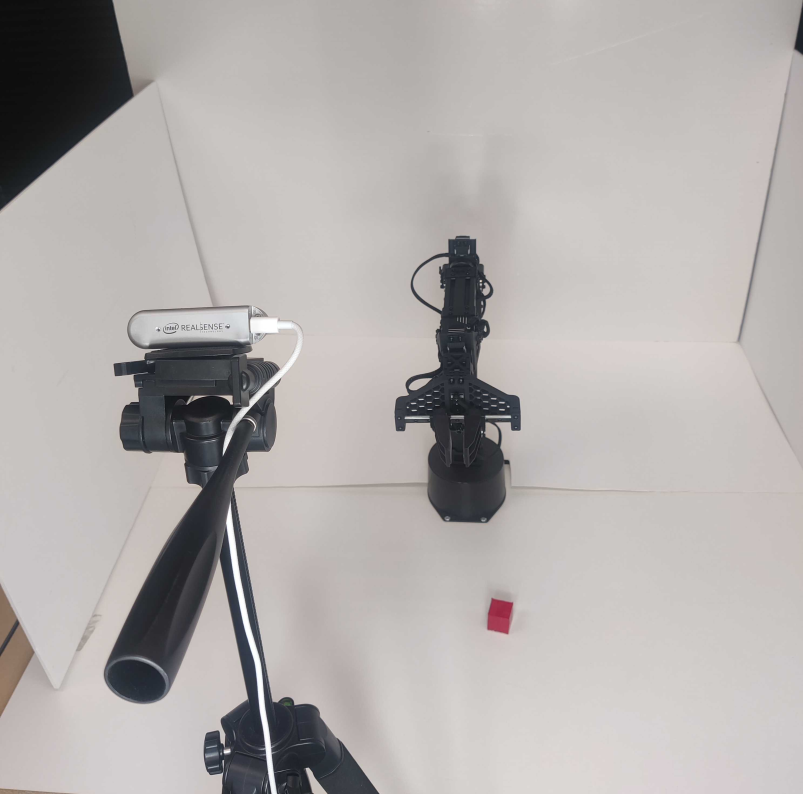}
    \includegraphics[width=0.24\linewidth]{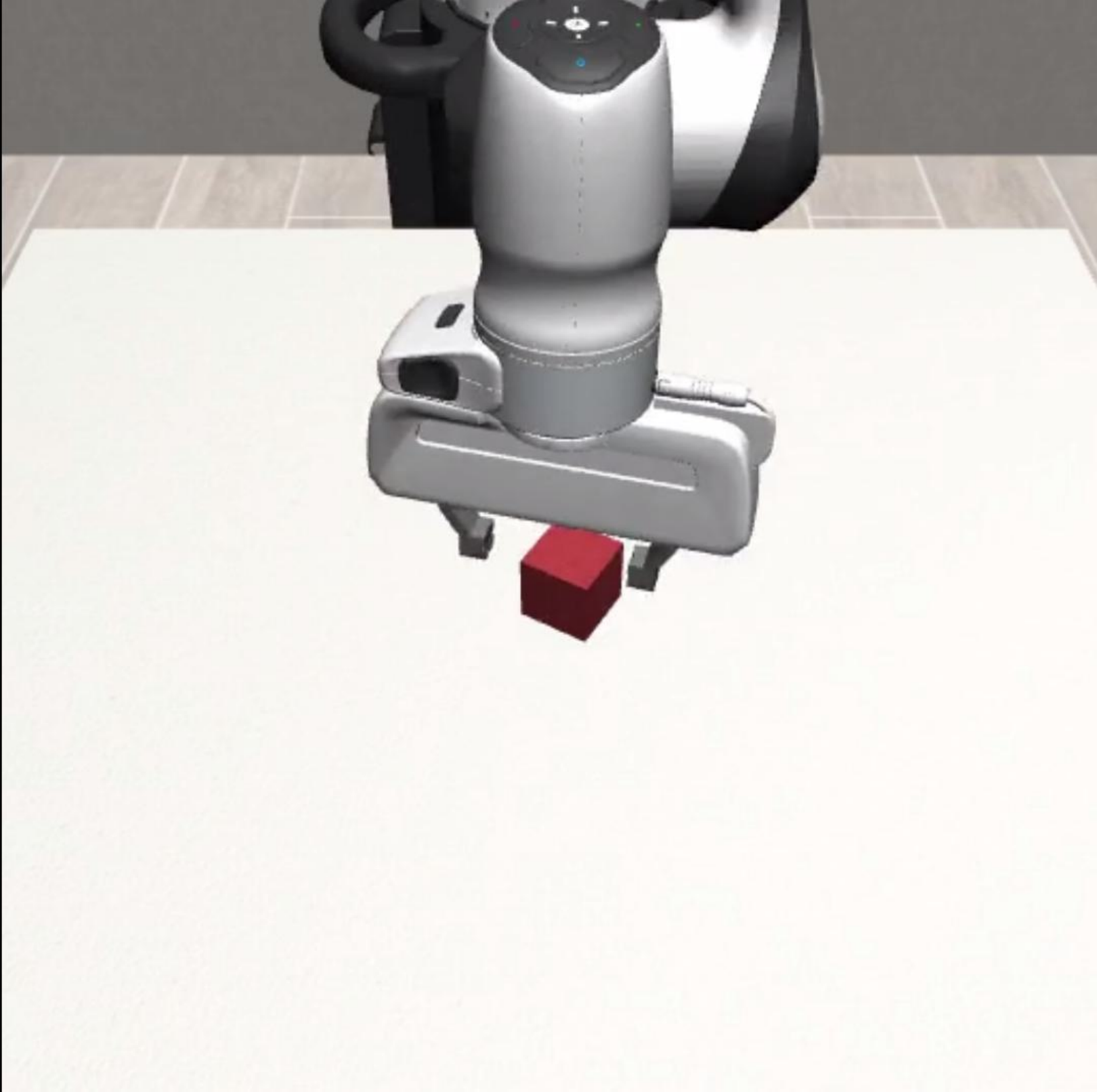}
    \includegraphics[width=0.24\linewidth]{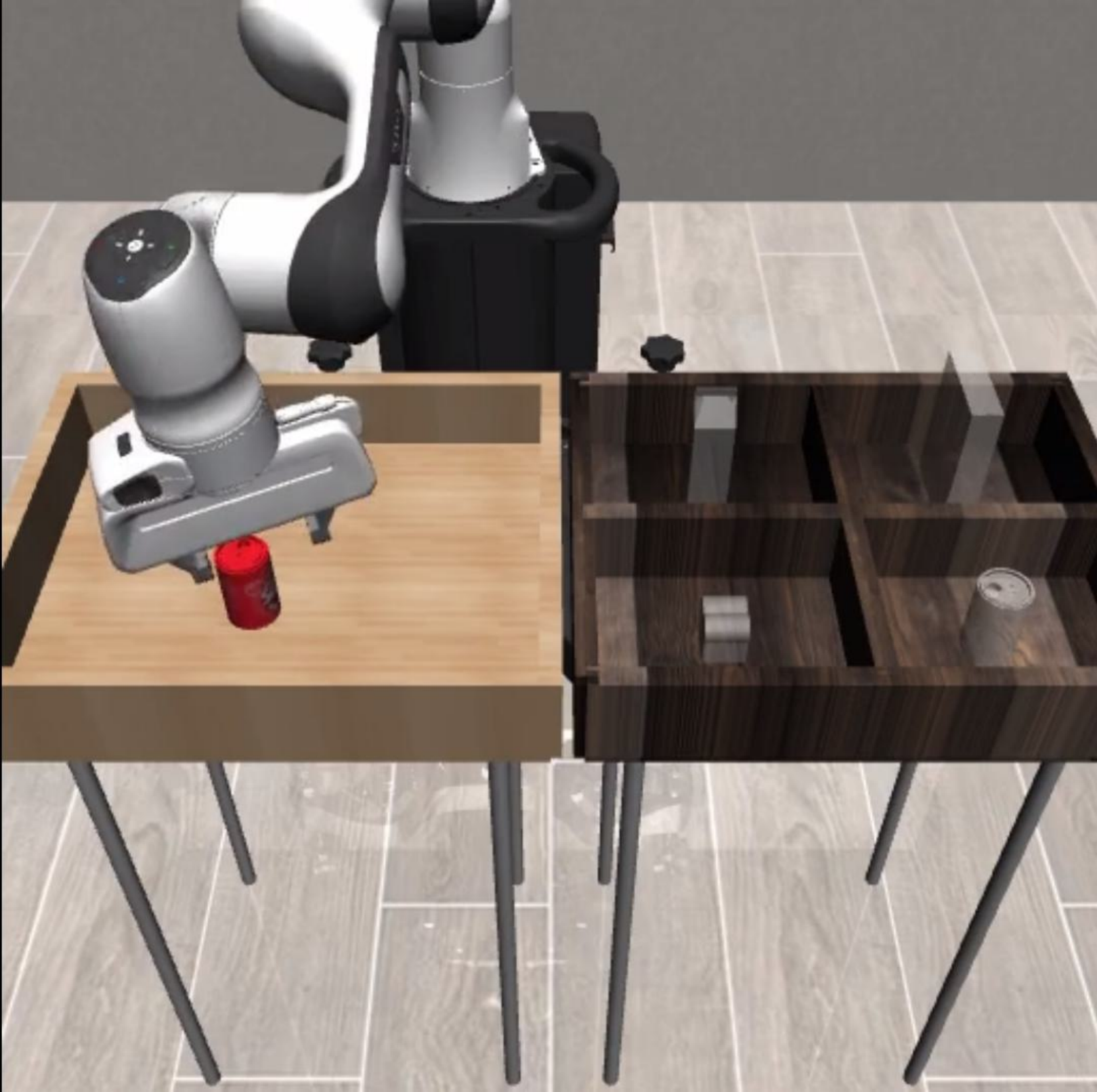}
    \includegraphics[width=0.24\linewidth]{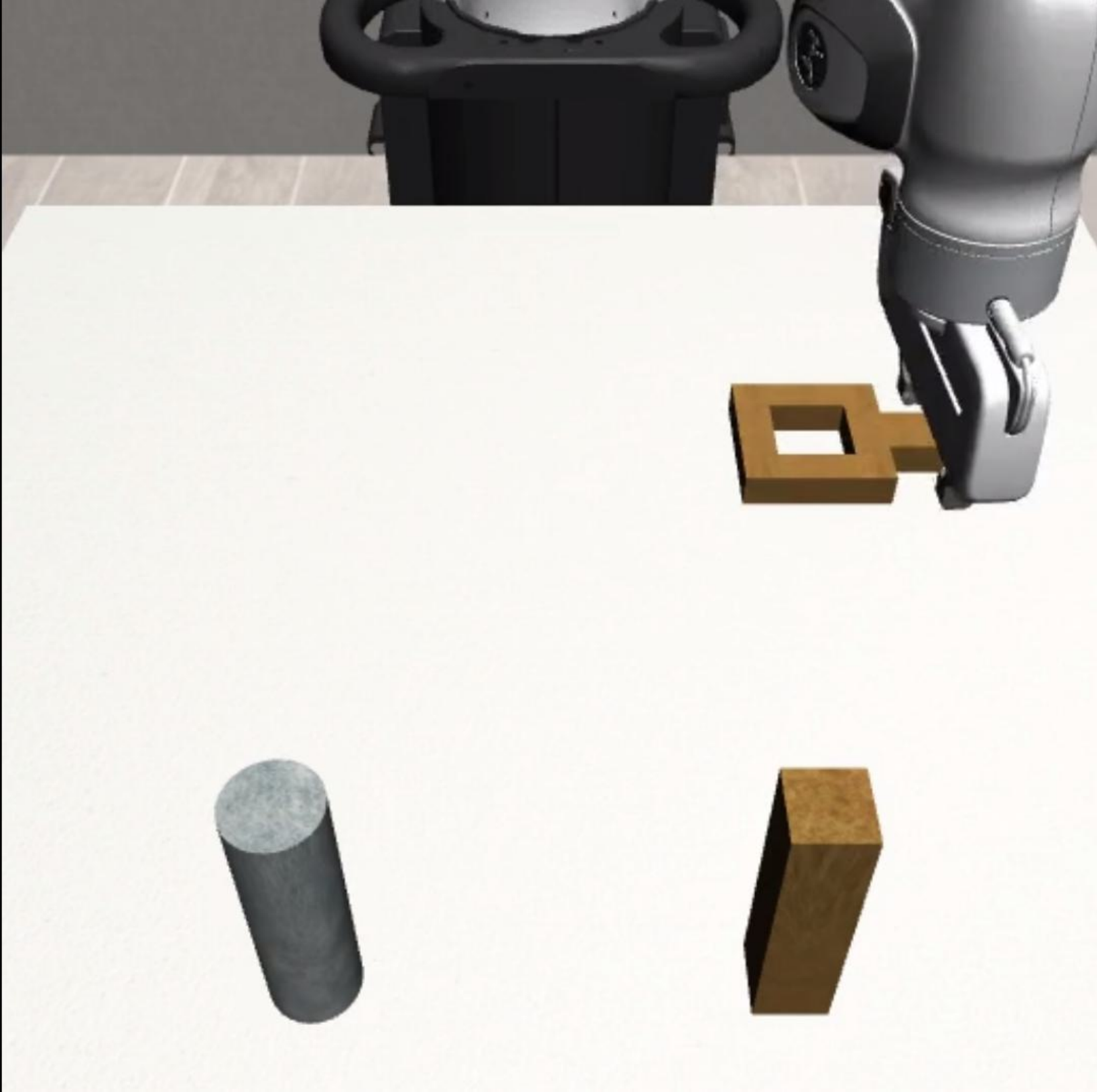}
    \caption{Experiment setups for \emph{Vantage}: (First) external camera placement with Unitree D1 arm, (second, third, fourth) RoboSuite environments for benchmark tasks.}
    \label{fig:real_robot}
\end{figure}

\subsection{Setup}
\textbf{Simulation:} We conduct experiments in the RoboSuite framework~\cite{zhu2025robosuitemodularsimulationframework} using RoboMimic benchmark datasets~\cite{robomimic2021}, which provide demonstrations for three standard manipulation tasks: \emph{Lift}, \emph{Square}, and \emph{Pick \& Place}. To test generality across model classes, we benchmark \emph{Vantage} with policies spanning diverse architectures and inductive biases, including BC~\cite{bain1995framework}, BCQ~\cite{fujimoto2019off}, BCT~\cite{robomimic2021}, and Diffusion~\cite{chi2024diffusionpolicyvisuomotorpolicy}. Using demonstrations collected from the default viewpoint in RoboSuite, we train each of the aforementioned policies. These serve as the base versions of the policies. We define the quarter-sphere in front of the robot (see Fig~\ref{fig:intro}) as the space of allowed viewpoints ($\Theta$). Then the policy is fine-tuned from different angles selected by various viewpoint selection strategies (default fine-tuning, grid search and \emph{Vantage}).

\begin{figure}[t]
    \centering
  \includegraphics[width=0.8\linewidth]{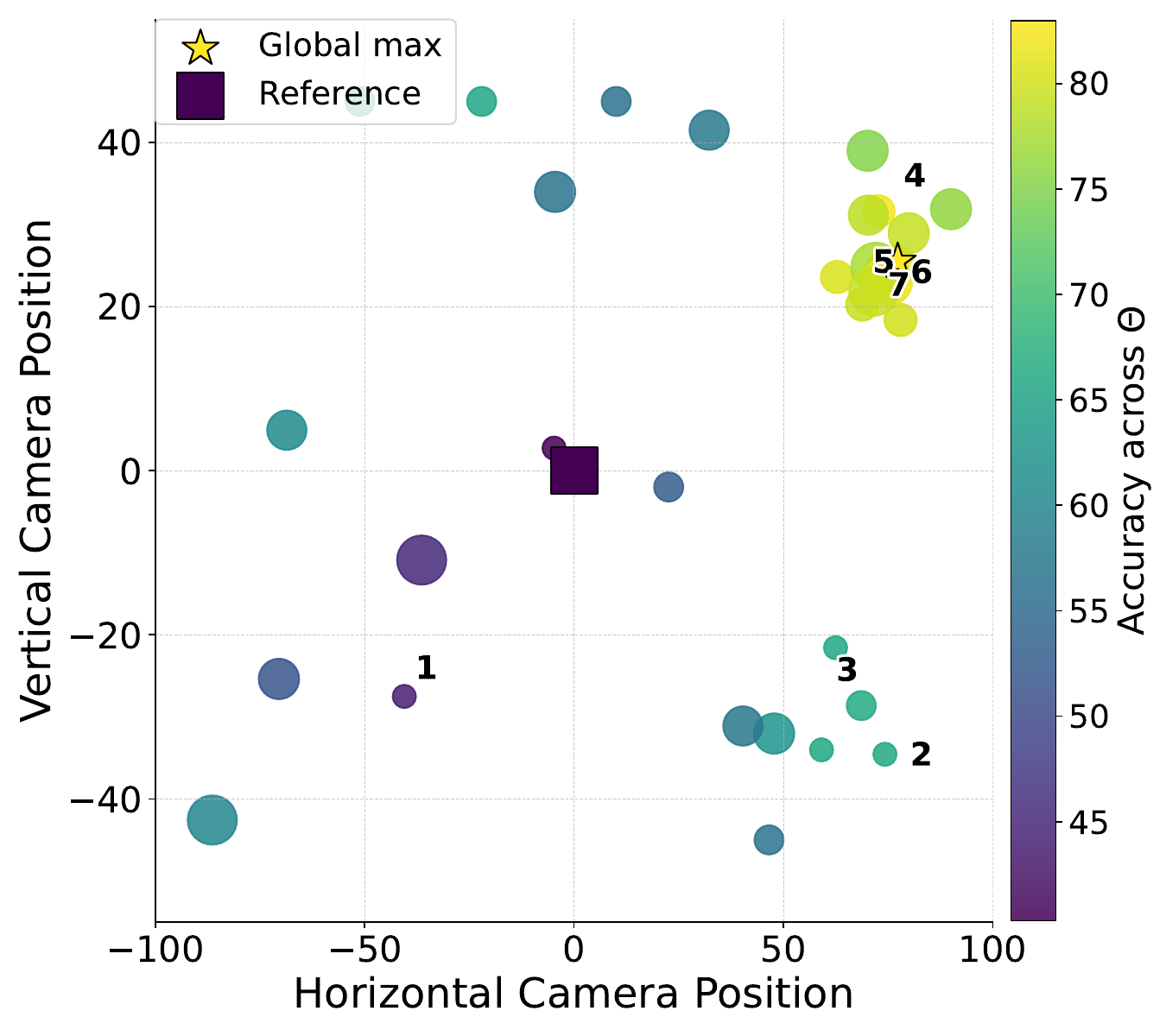}
    \caption{Vantage iterations for viewpoint selection on \emph{Pick \& Place}. Each point corresponds to a candidate camera position evaluated during fine-tuning, with color and size indicating success rate and iteration, respectively. The purple square marks the default training viewpoint, while the yellow star denotes the globally best-performing \emph{vantage point}. The search progressively concentrates around informative regions, converging to the optimum by iteration 7.}
    \label{fig:camera_pos}
\end{figure}

\begin{figure*}[t]
  \centering
  \begin{subfigure}[t]{0.8\textwidth}
    \centering
    \includegraphics[width=\linewidth]{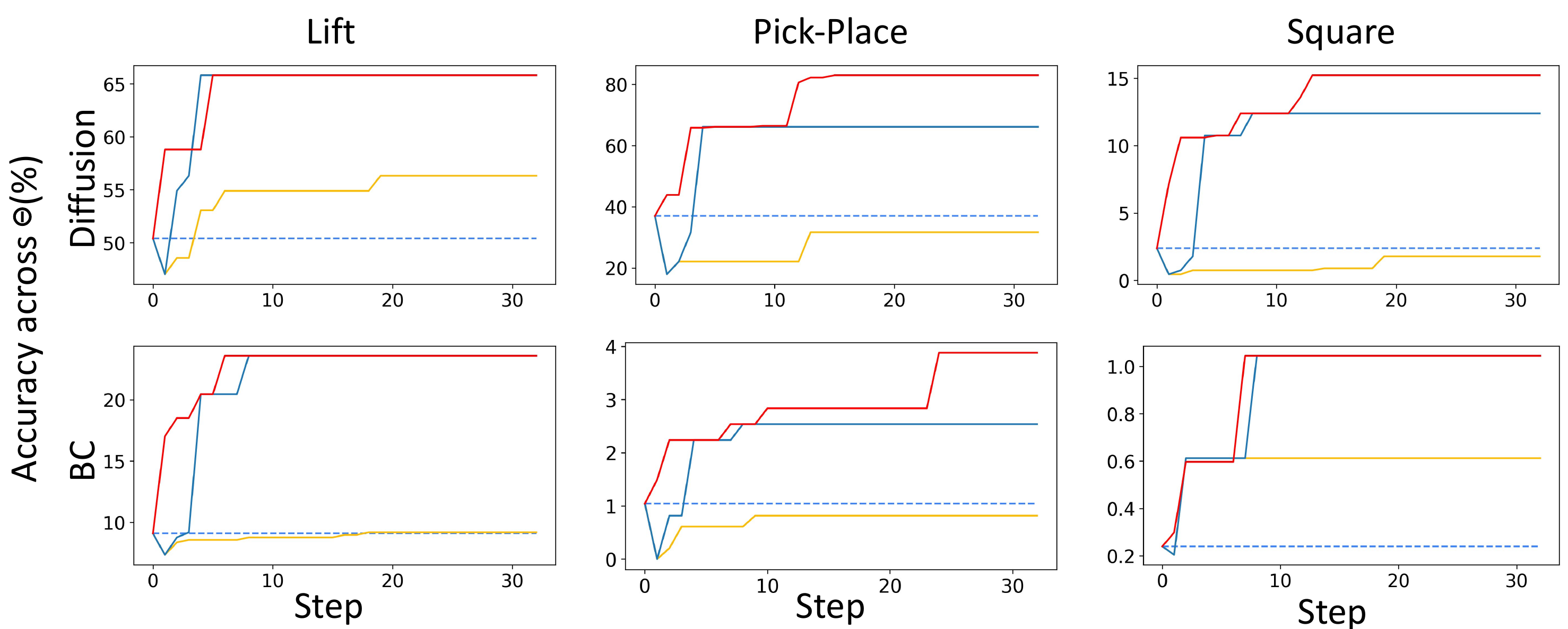}
  \end{subfigure}

  \begin{subfigure}[t]{0.55\textwidth}
    \centering
    \includegraphics[width=\linewidth]{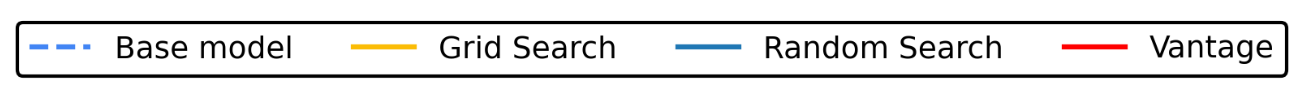}
  \end{subfigure}

    \caption{Convergence of best accuracy across viewpoint space $\Theta$ for Diffusion policies on \emph{Lift}, \emph{Pick \& Place}, and \emph{Square}. Each curve tracks the best-performing policy over optimization steps under different search strategies: base model (dashed), grid search, random search, and \emph{Vantage}. \emph{Vantage} (red) consistently converges faster and achieves higher final accuracy with fewer iterations, demonstrating its sample efficiency compared to exhaustive or heuristic approaches.}
  \label{fig:hyperplane_comparison_grid}
\end{figure*}

\textbf{Real robot:} To study sim-to-real transfer, we deploy \emph{Vantage} on a Unitree D1 7-DoF arm (see Fig~\ref{fig:real_robot}). We fine-tune an ACT based~\cite{zhao2023learning} visuomotor policy on a reaching task while systematically varying camera placements around the workspace. This setup allows us to test whether the robustness observed in simulation also persists under real-world sensing noise and actuation dynamics.

We evaluate under two regimes that reflect realistic deployment scenarios:

\begin{enumerate}
    \item \textbf{Diverse static viewpoints:} Policies are tested across a quarter-sphere of feasible camera placements, probing generalization to novel but fixed viewpoints not seen during fine-tuning.
    \item \textbf{Dynamic viewpoints:} Policies are tested under continuous camera motion during inference, simulating deployment with mobile manipulators or head-mounted cameras where viewpoints shift over time.
\end{enumerate}

\subsection{Comparison with Viewpoint Search Baselines}
\label{exp:1}
To contextualize performance, we compare against three standard baselines. 

\begin{enumerate}
    \item \textbf{Default angle accuracy} evaluates policies exclusively on the original pretraining viewpoint. 
    \item \textbf{Grid search fine-tuning} systematically samples viewpoints over a discretized grid, offering broader coverage but at significant cost. 
    \item \textbf{Standard viewpoint fine-tuning} samples views randomly, providing diversity but often wasting training budget on uninformative perspectives.  
\end{enumerate}

As summarized in Table~\ref{tab:table_1}, \emph{Vantage} consistently matches or outperforms these baselines across all tasks and policy families. The most dramatic gains appear in \emph{Pick \& Place} with a diffusion policy, where success improves from $37.01\%$ to $83.20\%$ (+$46.19\%$). Similar trends are observed for BC and BCT, demonstrating robustness across architectures.

\begin{table}[t]
  \centering
  \setlength{\tabcolsep}{4pt}
  \renewcommand{\arraystretch}{0.95}
  \small
     \caption{Comparison of success rates (\%) across augmentation strategies. 
    Viewpoint Randomization with 1 or 8 additional views provides limited or inconsistent gains, sometimes degrades performance. 
    Generated Augmentations with 5 synthetic views offers moderate improvements but remains less effective than optimized selection. 
    In contrast, \emph{Vantage} achieves competitive or superior performance with only a single optimized viewpoint.}
  \label{tab:augmentation_strategies}
  \resizebox{1\linewidth}{!}{%
  \begin{tabular}{lcccc}
    \toprule
    Task \& Policy & Viewpoint & Viewpoint & Generated & Vantage \\
    & Randomization & Randomization & Augmentation &  \\
      & (1 view) & (8 views) & (5 views) & (1 view) \\
    \midrule
    Pick Place (BC)        & {\normalsize 2.1}  & {\normalsize 4.3}  & {\normalsize 1.2}  & { \normalsize \textbf{3.9}}  \\
    Pick Place (Diffusion) & {\normalsize 63.8} & {\normalsize 42.6} & {\normalsize 52.3} & {\normalsize \textbf{83.2}} \\
    Square (BC)            & {\normalsize 0.9}  & {\normalsize 0.3}  & {\normalsize 0.3}  & {\normalsize \textbf{1.0}}  \\
    Square (Diffusion)     & {\normalsize 12.3} & {\normalsize \textbf{17.5}} & {\normalsize 14.6}  & {\normalsize 14.8} \\
    Real. (ACT)     & {\normalsize 36.0} & {\normalsize 30.0} & ---  & {\normalsize \textbf{44.0}} \\
    \bottomrule
  \end{tabular}}
\end{table}

\subsection{Efficiency of Vantage for Viewpoint Discovery}
\label{exp:2}
Unlike grid and random strategies, which waste budget on redundant or uninformative views, \emph{Vantage} leverages Bayesian optimization to quickly identify promising viewpoints. As visualized in Fig.~\ref{fig:camera_pos}, the search concentrates in informative regions after only a few iterations. In practice, convergence occurs within $10$–$15$ trials, a fraction of the budget required by exhaustive search. This efficiency is further reflected by the rapid performance gains as shown in Fig.~\ref{fig:hyperplane_comparison_grid}. The q-UCB calculation to determine the next viewpoint requires less than 30 seconds, while the combined process of data collection and fine-tuning completes within $\approx$1 hour, underscoring the overall time efficiency of the approach.

Quantitatively, Table~\ref{tab:augmentation_strategies} further highlights that \emph{Vantage} outperforms heavy augmentation. For example, in \emph{Pick Place} with a diffusion policy, viewpoint randomization with 8 additional views degrades performance, while \emph{Vantage} reaches $83.2\%$ using only a handful of views. This efficiency is also evident in the optimization landscape of Fig.~\ref{fig:compa}, which shows how \emph{Vantage} reshapes the performance contours.


\subsection{Performance Under Dynamic Viewpoints}
\label{exp:3}
A central test of real-world utility is robustness under dynamic camera motion. \emph{Vantage} fine-tuned policies sustain performance even when camera viewpoints change continuously at inference time. Dynamic viewpoint evaluation simulates settings such as mobile manipulators or head-mounted cameras, where the robot cannot rely on a fixed perspective. As shown in Table~\ref{tab:table_1}, policies fine-tuned with \emph{Vantage} consistently exhibit higher robustness under dynamic evaluation compared to baselines. For example, on \emph{Pick \& Place} with a Diffusion policy, success under dynamic motion increases from $88.3\%$ to $97.2\%$ with \emph{Vantage}. On the more challenging \emph{Square} task, Diffusion policy with \emph{Vantage} improves in accuracy from $8.3\%$ to $54.7\%$, demonstrating resilience to viewpoint ambiguity. Even simple BC policies, which otherwise collapse under motion (e.g., $3.0\%$ $\to$ $9.9\%$ on \emph{Pick \& Place}), benefit from targeted viewpoint selection. 

Table~\ref{tab:augmentation_strategies} further highlights this point by comparing \emph{Vantage} against viewpoint randomization (VR) \cite{tobin2017domain} and generated augmentation strategies akin to VISTA\cite{tian2025viewinvariantpolicylearningzeroshot}. While VR with many views can sometimes improve performance (e.g., Diffusion on \emph{Square}, $12.3\% \to 17.5\%$), it often requires large data budgets and may even degrade performance due to unstructured variability (VR with 8 views on \emph{Pick \& Place} reduces accuracy from $63.8\%$ to $42.6\%$). In contrast, \emph{Vantage} achieves $83.2\%$ success with only a handful of optimized views, demonstrating sample efficiency and superior robustness. By explicitly optimizing where the robot learns to see from, \emph{Vantage} induces viewpoint invariant representations that remain stable under continuous camera motion.

\subsection{Does Vantage work on real-world robots?}
\label{exp:4}

Finally, we evaluate whether the gains achieved in simulation translate to physical hardware. We deploy \emph{Vantage} on a Unitree D1 7-DoF arm and fine-tune an ACT based visuomotor policy for a reaching task under varying camera placements. As shown in Table~\ref{tab:augmentation_strategies}, naive augmentation strategies such as VR provide only limited or inconsistent improvements, with success rates ranging from $30.0\%$ to $36.0\%$. In contrast, \emph{Vantage} achieves a success rate of $44.0\%$ using only a single optimized viewpoint, outperforming the other methods. Because camera placement is imprecise in practice, in simulation we also inject realistic placement noise and measure the post-\emph{Vantage} accuracy of the best-performing model. This experiment revealed that, even under such placement errors, models trained with \emph{Vantage} remained within $\approx$5\% margin of error. These results confirm that strategic viewpoint selection not only improves robustness in simulation but also extends to real-world sensing and actuation conditions, where camera calibration errors, occlusions, and lighting variability pose additional challenges.

\section{CONCLUSIONS}

We introduced \emph{Vantage}, a framework that treats camera viewpoint selection as a continuous optimization problem and leverages Bayesian optimization to identify informative perspectives for fine-tuning visuomotor policies. By explicitly balancing exploration of novel viewpoints with exploitation of high-performing ones, \emph{Vantage} provides a principled and sample-efficient alternative to naive data augmentation or exhaustive viewpoint search. Our experiments across search benchmarks and multiple policy architectures demonstrate that \emph{Vantage} consistently improves robustness to viewpoint shifts, often by large margins. In particular, we observe substantial gains even with limited fine-tuning, as well as strong generalization under dynamic camera evaluation. These results highlight that carefully chosen training viewpoints can be as important as the underlying policy architecture in achieving viewpoint-agnostic manipulation.


\appendix

\subsection{Computing Resources}
We trained and fine-tuned all policies on an RTX 4090 (12 GB VRAM). A single iteration of data collection and policy training/evaluation takes roughly one hour, while each BO step completes in about 30 seconds.

\subsection{Proofs of Theoretical Guarantees}

We formalize the theoretical guarantees for \emph{Vantage} in this section.  
The true success rate function $f(\theta)$ is modeled as a sample from a Gaussian Process prior, and each fine-tuning experiment yields a noisy observation $y_t = f(\theta_t) + \varepsilon_t$.  
The GP posterior mean $\mu_t(\theta)$ and variance $\sigma_t^2(\theta)$ provide estimates of performance and uncertainty, and the GP-UCB acquisition balances exploration and exploitation. Below we establish regret bounds, convergence rates, and robustness under camera placement error.

\begin{theorem}[GP-UCB Regret Bound]\label{thm:gpucb}
Let \(f:\Theta\to\mathbb{R}\) be a mapping from angles to success rates, drawn from a GP prior with kernel \(k\).  At each round \(t=1,\dots,T\) choose, 
\[
\theta_t \;=\;\arg\max_{\theta\in\Theta}\Bigl[\mu_{t-1}(\theta)\;+\;\sqrt{\beta_t}\,\sigma_{t-1}(\theta)\Bigr],
\]
and observe \(y_t=f(\theta_t)+\varepsilon_t\) with \(\varepsilon_t\) zero-mean sub-Gaussian noise.  Then, with probability at least \(1-\delta\),
\[
R(T)\;=\;\sum_{t=1}^T\sum_{i=1}^q\bigl[f(\theta^*)-f(\theta_{t,j})\bigr]
\;=\;
O\!\bigl(\sqrt{qT\,\gamma_{qT}\,\beta_T}\bigr),
\]
where \(\theta^*=\arg\max_\Theta f\), \(\gamma_T\) is the maximum information gain after \(T\) steps, and \(\beta_T\) is chosen as in~\cite{Srinivas_2012}.
\end{theorem}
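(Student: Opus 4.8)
The plan is to adapt the GP-UCB regret analysis of Srinivas et al.~\cite{Srinivas_2012} to the batched ($q$-point) setting, in which each of the $T$ rounds contributes $q$ evaluations for a total of $qT$ queries to $f$. Because the policy is scored on a fixed uniform grid, I would first treat $\Theta$ as a finite set, so that the confidence band follows from a plain union bound; the continuous case then follows from the standard covering/discretization argument under smoothness of the squared-exponential kernel, which I would state as a remark rather than carry out. Throughout I assume the kernel is normalized so that $k(\theta,\theta)\le 1$, and that $\beta_t$ is chosen of order $\log(|\Theta|\,t^2/\delta)$ exactly as in~\cite{Srinivas_2012}.

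The sequential skeleton proceeds in three steps. First I would establish that, with probability at least $1-\delta$, the band $|f(\theta)-\mu_{t-1}(\theta)|\le\sqrt{\beta_t}\,\sigma_{t-1}(\theta)$ holds simultaneously for every grid point $\theta$ and every round $t$. Second, for each selected point $\theta_{t,j}$ I would bound the instantaneous regret: since the acquisition rule maximizes $\mu_{t-1}+\sqrt{\beta_t}\,\sigma_{t-1}$, the band gives $f(\theta^*)\le\mu_{t-1}(\theta_{t,j})+\sqrt{\beta_t}\,\sigma_{t-1}(\theta_{t,j})$ and $f(\theta_{t,j})\ge\mu_{t-1}(\theta_{t,j})-\sqrt{\beta_t}\,\sigma_{t-1}(\theta_{t,j})$, so that $f(\theta^*)-f(\theta_{t,j})\le 2\sqrt{\beta_t}\,\sigma_{t-1}(\theta_{t,j})$. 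Third, I would convert the accumulated posterior variance into information gain: the mutual information between $f$ and the $qT$ observations equals $\tfrac12\sum\log(1+\sigma^{-2}\sigma^2_{\,\cdot})$, and an elementary inequality valid on $[0,\sigma^{-2}]$ (as in~\cite{Srinivas_2012}) yields $\sum\sigma^2_{\,\cdot}\le C\,\gamma_{qT}$ for a constant $C$ depending only on the noise level. A single Cauchy--Schwarz step over all $qT$ terms, combined with monotonicity $\beta_t\le\beta_T$, then gives $R(T)\le\sqrt{qT\sum 4\beta_T\,\sigma^2_{\,\cdot}}=O\!\bigl(\sqrt{qT\,\gamma_{qT}\,\beta_T}\bigr)$, which is the claimed rate.

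The hard part is handling the within-batch correlation. The $q$ points of round $t$ are selected from a single posterior $(\mu_{t-1},\sigma_{t-1})$ computed \emph{before} any of them is observed, so the standard deviation used to pick the later members of a batch is stale, i.e.\ larger than a fully sequential update would produce. To preserve the same bound I would invoke a GP-BUCB--type argument, showing that the ratio between the pre-batch variance $\sigma_{t-1}(\theta_{t,j})$ and the hypothetical sequentially-updated variance is bounded by a constant governed by the maximum conditional information gain accrued within a single batch of size $q$. This constant is absorbed into $C$ and is precisely what forces $\gamma_{qT}$ rather than $\gamma_T$ to appear in the final rate; controlling it rigorously (and confirming that the reparameterized q-UCB selection of Wilson et al.\ inherits the same deterministic UCB guarantee up to the Monte Carlo approximation error) is the step I expect to require the most care.
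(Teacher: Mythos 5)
Your proposal follows the same Srinivas-style skeleton as the paper's proof---confidence band, instantaneous regret bound \(f(\theta^*)-f(\theta_{t,j})\le 2\sqrt{\beta_t}\,\sigma_{t-1}(\theta_{t,j})\), then one Cauchy--Schwarz step converting summed posterior variances into information gain---but you go materially further on exactly the point where the paper's own argument is incomplete. The paper's proof is written purely for the sequential (\(q=1\)) case: it sums \(r_t\) over \(t=1,\dots,T\) only, never touches the inner sum over batch members, and concludes \(O\bigl(\sqrt{T\,\gamma_T\,\beta_T}\bigr)\), which does not match the stated bound \(O\bigl(\sqrt{qT\,\gamma_{qT}\,\beta_T}\bigr)\); the substitution of \(\gamma_{qT}\) for \(\gamma_T\) is asserted only in the prose preceding the theorem, not proved. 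Your third paragraph identifies precisely why that substitution is not free: all \(q\) points in a round are scored against a single \emph{stale} posterior \((\mu_{t-1},\sigma_{t-1})\), so the variance-to-information-gain conversion (Lemma 5.4 of \cite{Srinivas_2012}) cannot be applied term-by-term without first bounding the ratio between the stale variance and the sequentially updated one, via a GP-BUCB-type argument. That is the correct repair, and it is the step the paper skips entirely. Two caveats on your plan: (i) the GP-BUCB ratio bound is typically \emph{exponential} in the within-batch conditional information gain, i.e.\ a factor like \(\exp(\gamma_{q-1})\); this is still a constant in \(T\), so the rate survives, but you should state it rather than fold it silently into \(C\); (ii) your closing worry about the quasi-Monte Carlo q-UCB of Wilson et al.\ only approximately maximizing the acquisition is legitimate, but neither you nor the paper can close it without assuming the acquisition argmax is computed exactly---the paper implicitly makes that assumption, and you should make it explicit. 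In short: your route subsumes the paper's proof and would, if the GP-BUCB step were carried out, actually establish the theorem as stated, whereas the paper proves a weaker (sequential) claim.
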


\begin{proof}
By Theorem 2 of Srinivas \emph{et al.} (2012), with probability \(1-\delta\), for all \(t\) and all \(\theta\),
\[
\bigl|f(\theta)-\mu_{t-1}(\theta)\bigr|
\;\le\;
\sqrt{\beta_t}\,\sigma_{t-1}(\theta).
\]
Hence the instantaneous regret \(r_t = f(\theta^*) - f(\theta_t)\) satisfies
\[
r_t \;\le\;
2\,\sqrt{\beta_t}\,\sigma_{t-1}(\theta_t).
\]
Summing and applying Cauchy–Schwarz with with the definition of information gain gives
\[
\begin{aligned}
R(T)
&= \sum_{t=1}^T r_t \le 2\sum_{t=1}^T \sqrt{\beta_t}\,\sigma_{t-1}(\theta_t) \\
&\le 2\sqrt{\Bigl(\sum_t \beta_t\Bigr)
             \Bigl(\sum_t\sigma^2_{t-1}(\theta_t)\Bigr)}
   = O\!\bigl(\sqrt{T\,\gamma_T\,\beta_T}\bigr)
\end{aligned}
\]
\end{proof}

\begin{theorem}[Average Success Convergence]\label{thm:avgconv}
Under the same setting as Theorem \ref{thm:gpucb}, with probability at least \(1-\delta\),
\[
\frac{1}{T}\sum_{t=1}^T J\bigl(\pi_{\theta_t}\bigr)
\;\ge\;
J\bigl(\pi_{\theta^*}\bigr)
\;-\;
O\!\Bigl(\sqrt{\tfrac{\gamma_T\,\beta_T}{T}}\Bigr).
\]
In particular, the mean success converges to the optimum at rate \(O(T^{-1/2})\).
\end{theorem}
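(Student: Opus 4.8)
The plan is to derive Theorem~\ref{thm:avgconv} as an immediate corollary of the cumulative regret bound already established in Theorem~\ref{thm:gpucb}, simply by dividing through by $T$ and rearranging. The conceptual key is to recognize that the black-box function $f$ appearing in Theorem~\ref{thm:gpucb} is precisely the success-rate functional $J(\pi_\theta)$: each fine-tuning evaluation at a viewpoint $\theta_t$ returns a noisy measurement of $J(\pi_{\theta_t})$, so the instantaneous regret $r_t = f(\theta^*) - f(\theta_t)$ coincides with $J(\pi_{\theta^*}) - J(\pi_{\theta_t})$. Under this identification the cumulative regret is exactly $R(T) = \sum_{t=1}^T \bigl[J(\pi_{\theta^*}) - J(\pi_{\theta_t})\bigr]$, which is the quantity we must control.

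First I would invoke Theorem~\ref{thm:gpucb}, which guarantees that on the same high-probability event (where the GP confidence bands hold simultaneously for all $t$ and all $\theta$) one has $R(T) = O\!\bigl(\sqrt{T\,\gamma_T\,\beta_T}\bigr)$. Dividing both sides by $T$ converts cumulative regret into \emph{average} regret,
\[
J(\pi_{\theta^*}) - \frac{1}{T}\sum_{t=1}^T J(\pi_{\theta_t}) = \frac{R(T)}{T} = O\!\Bigl(\sqrt{\tfrac{\gamma_T\,\beta_T}{T}}\Bigr),
\]
and rearranging yields the claimed lower bound on the running average. The final $O(T^{-1/2})$ rate then follows from the sublinear growth of the information gain: for the squared-exponential kernel used in~(\ref{eq:gp}), $\gamma_T$ grows only polylogarithmically in $T$, so $\sqrt{\gamma_T\beta_T/T} \to 0$ and the mean success converges to $J(\pi_{\theta^*})$.

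The step requiring the most care is the bookkeeping of the high-probability guarantee rather than any hard estimate. Since the averaging step introduces no new randomness and no additional union bound, the probability-$(1-\delta)$ event inherited from Theorem~\ref{thm:gpucb} transfers verbatim, so no extra failure probability is incurred. A second point to handle carefully is the consistency between the batched $q$-UCB procedure actually deployed and the single-point form stated here: in the batch variant the double sum of Theorem~\ref{thm:gpucb} should be normalized by $qT$ and $\gamma_T$ replaced by $\gamma_{qT}$, after which the identical division argument gives $O\!\bigl(\sqrt{\gamma_{qT}\beta_T/(qT)}\bigr)$, with the stated form recovered by specializing to $q=1$. Beyond this identification of $f$ with $J$ and the inheritance of the confidence event, the argument is essentially a one-line division and poses no genuine analytic obstacle.
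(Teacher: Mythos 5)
Your proposal is correct and follows exactly the paper's own argument: identify the cumulative regret from Theorem~\ref{thm:gpucb} with $\sum_{t=1}^T\bigl[J(\pi_{\theta^*})-J(\pi_{\theta_t})\bigr]$, divide by $T$, and rearrange, with the high-probability event inherited unchanged. Your additional remarks on the batch-versus-single-point bookkeeping (replacing $\gamma_T$ by $\gamma_{qT}$ and normalizing by $qT$) go slightly beyond the paper's two-line proof but do not change the substance of the argument.
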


\begin{proof}
From Theorem \ref{thm:gpucb}, with high probability,
\[
\begin{aligned}
\sum_{t=1}^T\bigl[J(\pi_{\theta^*})-J(\pi_{\theta_t})\bigr]
&= O\!\bigl(\sqrt{T\,\gamma_T\,\beta_T}\bigr) \\
\implies J(\pi_{\theta^*})
&- \frac{1}{T}\sum_{t=1}^T J(\pi_{\theta_t})
= O\!\Bigl(\sqrt{\tfrac{\gamma_T\,\beta_T}{T}}\Bigr)
\end{aligned}
\]
which yields the stated bound.
\end{proof}

\begin{theorem}[Robustness under camera placement error]\label{thm:robustness_appendix}
Let $f:\mathbb{R}^d \to \mathbb{R}$ be modeled with a Gaussian process prior with squared--exponential kernel,
\[
k(x,x') = \sigma_f^2 \exp\!\Big(-\tfrac{\|x-x'\|^2}{2\ell^2}\Big).
\]
Suppose Bayesian optimization selects query viewpoints $x_t \in \mathbb{R}^d$, but the executed viewpoints are 
$\tilde{x}_t = x_t + \varepsilon_t$, with $\varepsilon_t \sim \mathcal{N}(0,\sigma_x^2 I_d)$ i.i.d.\ perturbations due to camera placement error. Define the effective objective $g(x) = \mathbb{E}[\,f(x+\varepsilon)\,].$

Then $g$ is governed by a GP with kernel,
\[
k_g(x,x') \;=\; \sigma_f^2
\Big(\tfrac{\ell^2}{\ell^2+2\sigma_x^2}\Big)^{\!d/2}
\exp\!\Big(-\tfrac{\|x-x'\|^2}{2(\ell^2+2\sigma_x^2)}\Big),
\]
which is a squared-exponential kernel with effective hyper-parameters,
$\ell_{\!\mathrm{eff}}^{\,2}=\ell^2+2\sigma_x^2,
\qquad
\sigma_{f,\mathrm{eff}}^{\,2}=\sigma_f^2\Big(\tfrac{\ell^2}{\ell^2+2\sigma_x^2}\Big)^{\!d/2}.$
\end{theorem}

\begin{proof}
Linearity of expectation preserves Gaussianity, so $g$ is again a Gaussian process. Its covariance is.
\[
k_g(x,x') \;=\; \mathbb{E}_{\varepsilon,\varepsilon'}\big[k(x+\varepsilon,\,x'+\varepsilon')\big],
\]
where $\varepsilon,\varepsilon'\!\sim\!\mathcal{N}(0,\sigma_x^2I_d)$ are independent.  
Let $\mu = x-x'$ and define $\delta = \varepsilon-\varepsilon'$. Then, $\delta \sim \mathcal{N}(0,2\sigma_x^2I_d)$, and hence,
\[
k_g(x,x') \;=\; \sigma_f^2 \,\mathbb{E}_{\delta}\!\left[
\exp\!\Big(-\tfrac{\|\mu+\delta\|^2}{2\ell^2}\Big)\right].
\]
This expectation can be evaluated using the Gaussian moment identity, $\mathbb{E}_{\delta\sim\mathcal{N}(0,\Sigma)}\!
\Big[\exp\!\Big(-\tfrac{1}{2}(\mu+\delta)^\top A(\mu+\delta)\Big)\Big]$
\[
\begin{aligned}
 = |I+\Sigma A|^{-1/2}\,
   \exp\!\Big(-\tfrac{1}{2}\mu^\top A(I+\Sigma A)^{-1}\mu\Big).
\end{aligned}
\]

Substitute $A=\tfrac{1}{\ell^2}I_d$ and $\Sigma=2\sigma_x^2 I_d$. Then,
\[
|I+\Sigma A|^{-1/2} 
=\Big(\tfrac{\ell^2}{\ell^2+2\sigma_x^2}\Big)^{\!d/2},
A(I+\Sigma A)^{-1}
=\tfrac{1}{\ell^2+2\sigma_x^2}I_d.
\]
\[
\implies k_g(x,x')=\sigma_f^2
\Big(\tfrac{\ell^2}{\ell^2+2\sigma_x^2}\Big)^{\!d/2}
\exp\!\Big(-\tfrac{\|\mu\|^2}{2(\ell^2+2\sigma_x^2)}\Big)
\]

\end{proof}

\subsection{Defining the space of allowed camera placements}

Let \(b \in \mathbb{R}^3\) denote the robot's base position and let \(r > 0\) be the fixed radial distance at which the camera is placed.  Let the sphere of radius \(r\) around \(b\) is defined by $S \;=\; \bigl\{\,x \in \mathbb{R}^3 : \|x - b\| = r \bigr\}.$ For experimental purposes, we restrict our space of viewpoints to the \emph{spherical quadrant} in front of the robot and above the table (the blue area in Fig.~\ref{fig:intro}):
\[
\Theta \;=\; \bigl\{\,x \in S : 
\;(x - b)\cdot u \;\ge\; 0
\quad\text{and}\quad
(x - b)\cdot n \;\ge\; 0
\bigr\},
\]
where \(u\) is the unit vector pointing forward from the robot’s base and \(n\) is the upward normal to the table plane. Any viewpoint \(\theta \in \Theta\) can be uniquely parameterized by the horizontal and vertical angles \(\theta_h\) and \(\theta_v\), respectively:
\[
x(\theta_h,\theta_v)
= b + r
\begin{bmatrix}
\cos\theta_v \,\cos\theta_h \\[6pt]
\cos\theta_v \,\sin\theta_h \\[6pt]
\sin\theta_v
\end{bmatrix},
\quad
\]
\[
\theta_h \in \bigl[-\tfrac{\pi}{2}, \tfrac{\pi}{2}\bigr],
\quad
\theta_v \in \bigl[-\tfrac{\pi}{4}, \tfrac{\pi}{4}\bigr]
\]
Any $\theta = (\theta_h,\theta_v) \in \Theta$ is considered a valid camera placement.
Robot policies are initially trained from the viewpoint \(\theta = (0,0)\). The resulting model is then fine-tuned using data collected from one additional viewpoint. To quantify the variability of our method, in all experiments, this process is repeated four times, each with a different randomly selected initialization point for the GP. Next, we evaluate the model's performance on a uniform grid over the viewing space \(\Theta\) and fit a GP surrogate to these measurements. By applying the q-UCB acquisition function to the GP, we identify the next eight most informative viewpoints for further data collection and fine-tuning.

We parameterize each viewpoint by normalized coordinates \(\nu_h,\nu_v \in [0,1]\). The GP is trained on these normalized inputs and thus only \textit{sees} values in the unit square. To convert a normalized sample \((\nu_h,\nu_v)\) into actual camera angles, we apply the affine transformation:
\[
\theta_{h/v}
= \bigl(\nu_{h/v} - 0.5\bigr)\,\bigl(\theta_{h/v}^{\max} - \theta_{h/v}^{\min}\bigr)
+ \frac{\theta_{h/v}^{\max} + \theta_{h/v}^{\min}}{2}\,
\]

All methods (grid search, random search, and \emph{Vantage}) were allocated identical compute resources. Each task–policy combination underwent eight fine-tuning steps per iteration per method, and 4 iterations were done leading to 32 models fine-tuned for each combination.

\begin{figure*}[t]
  \centering

  \begin{subfigure}[t]{0.75\textwidth}
    \centering
    \foreach \n in {1,...,3}{%
      \includegraphics[width=0.32\linewidth]{images/gif_plot/Lift_BC_\n.pdf}\hfill%
    }\\[2pt]
    \foreach \n in {1,...,3}{%
      \includegraphics[width=0.32\linewidth]{images/gif_plot/Square_BC_\n.pdf}\hfill%
    }\\[2pt]
    \foreach \n in {1,...,3}{%
      \includegraphics[width=0.32\linewidth]{images/gif_plot/PickPlace_BC_\n.pdf}\hfill%
    }
    \label{fig:bc_grids}
  \end{subfigure}


  \caption{Rollout progression across camera viewpoints and training iterations for each task. Each row corresponds to a different manipulation task, namely \emph{Lift}, \emph{Square} and \emph{PickPlace}, while columns illustrate sequential outputs of the BC policy. The plots visualize how task performance evolves as the policy is fine-tuned with optimized viewpoints, bigger circles indicate points from the current iteration while smaller ones indicate older iterations.}
  \label{fig:bc_combined}
\end{figure*}

\subsection{Discussion on the BO Formulation}
Let the dataset collected from the default viewpoint be $D_0$ and the candidate datasets (viewpoints) suggested by BO at each iteration be  $D_1,D_2,\dots,D_n$. Let the base policy trained on $D_0$ be $\pi_0$. At each iteration $i$, we form the union $D_0 \cup D_{i}$ and fine-tune the base policy using this data. At every iteration we restart from $\pi_0$ and fine-tune it on $D_0 \cup D_{i}$, rather than continuing from the previously fine-tuned policy or considering the cumulative. BO is optimizing the performance of $\pi_0$ when fine-tuned on $D_0 \cup D_{i}$: 
\[
f(D_i) = R\bigl(\text{FineTune}(\pi_0; D_0 \cup D_i)\bigr),
\]
where $R$ is the success rate. If we were to fine-tune the policy cumulatively, the objective function that BO tries to optimize will become non-stationary. In other words, the mapping becomes,
\[
f_i(D_i) = R\bigl(\text{FineTune}(\pi_{i-1}; D_i)\bigr),
\]
and \( f_i \) depends on the entire training history up to iteration \( i-1 \).
This makes the objective dynamic and path-dependent, violating one of the key assumptions of standard BO, that it is optimizing a static, smooth, and approximately stationary black-box function. To obtain better results than ours, one might consider a BO formulation using nonstationary Gaussian process approximations~\cite{paciorek2003nonstationary,senanayake2018automorphing,senanayake2016spatio} or time-dependent acquisition functions~\cite{bardou2025optimizing}. 

\subsection{Additional Results}

See Figures~\ref{fig:bc_combined} and~\ref{fig:hyperplane_all} for Vantage's convergence and success rates, respectively.
\begin{figure}[ht]
  \centering
  \includegraphics[width=0.4\textwidth]{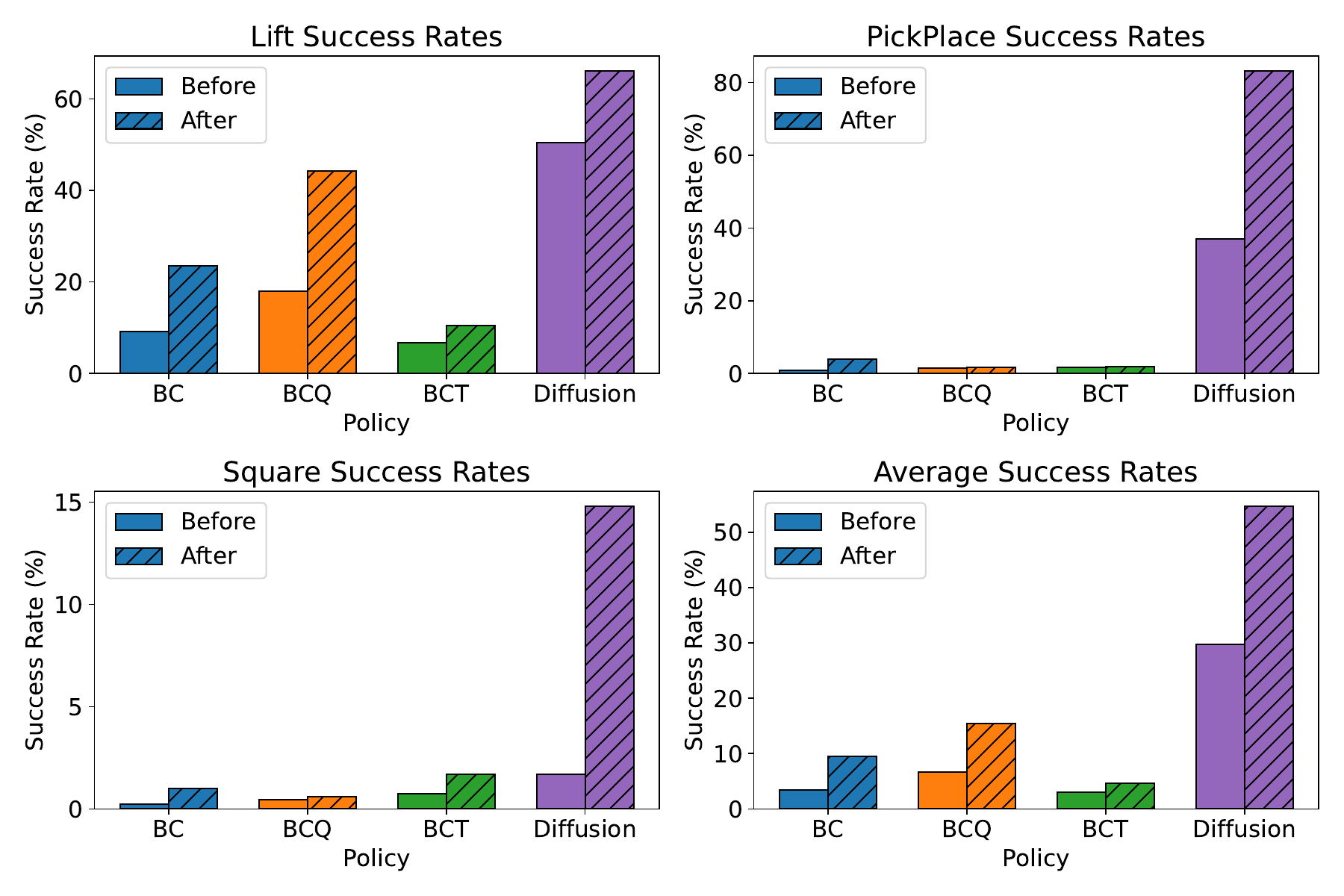}
  \caption{Success rates for each model across Lift, PickPlace, Square, and Average metrics. Solid bars indicate default model and hatched bars indicate after applying Vantage}
  \label{fig:hyperplane_all}
\end{figure}

\end{document}